\documentclass[journal, onecolumn]{IEEEtran}
%

\usepackage{times}  
\usepackage{helvet}  
\usepackage{courier}  
\usepackage{url}  
\usepackage{graphicx}  
\frenchspacing  
\setlength{\pdfpagewidth}{8.5in}  
\setlength{\pdfpageheight}{11in}  %
\usepackage{amsmath,mathtools,amsthm}
\usepackage{amssymb}
\usepackage{amstext}
\usepackage{tikz}
\usepackage{mathrsfs} 
\usepackage{subfigure}
\usepackage{color}
\usepackage[linesnumbered,ruled]{algorithm2e}

\usepackage{multirow}

\newcommand{\subseqs}{\mathbf{S}}

\newcommand{\ranking}{\mathbf{r}}
\newcommand{\rankings}{\mathcal{R}}

\newtheorem{definition}{Definition}
\newtheorem{theorem}{Theorem}

\newtheorem{remark}{Remark}

\newtheorem{example}{Example}

\ifCLASSINFOpdf
\else
\fi
\hyphenation{op-tical net-works semi-conduc-tor}

\begin{document}
%
\title{Quantifying consensus of rankings based on $q$-support patterns}
%
%
%

\author{Zhengui Xue, Zhiwei Lin, Hui Wang,
        and Sally McClean 
\thanks{The authors are with the School of Computing, University of Ulster, United Kingdom. e-mail: zhenguixue@gmail.com, z.lin@ulster.ac.uk, h.wang@ulster.ac.uk, si.mcclean@ulster.ac.uk}
}

\maketitle

\begin{abstract}
Rankings, representing preferences over a set of candidates, are widely used in many information systems, e.g., group decision making and information retrieval.  It is of great importance to evaluate the consensus of the obtained rankings from multiple agents. 
An overall measure of the consensus degree 
provides an insight into the ranking data. Moreover, it could provide a quantitative indicator for consensus comparison  between groups and further improvement of a ranking system.  Existing studies are insufficient in assessing the overall consensus of a ranking set. They did not provide an evaluation of the consensus degree of preference patterns in most rankings. In this paper, a novel consensus quantifying approach, without the need for any correlation or distance functions as in existing studies of consensus, is proposed based on a  concept of $q$-support patterns of rankings.   The $q$-support patterns represent the commonality embedded  in a set of rankings. A method for detecting outliers in a set of rankings is naturally derived from the proposed consensus quantifying approach. Experimental studies are conducted to demonstrate the effectiveness of the proposed approach.
\end{abstract}

\begin{IEEEkeywords}
Rankings, consensus, support patterns, outlier detection
\end{IEEEkeywords}

%
\IEEEpeerreviewmaketitle

\section{Introduction}
%
%
%
%

\IEEEPARstart{E}{xtensive} studies have been carried out in social science to measure group cohesion, in order to gain insight into the factors affecting group cohesion and further promote higher group consistency (see, e.g., \cite{hogg1993group,carron2000cohesion,salas2015measuring,chiniara2018servant}). 
%
In artificial intelligence, rankings have been widely used to represent the preferences of agents (humans or systems) over a set of candidates in many information systems, such as group decision making \cite{kim1999interactive,qin2015multi,zhu2014deriving} 
and information retrieval \cite{hotho2006information,liu2009learning,poshyvanyk2007feature}. 
%
It is important to evaluate the degree to which the rankings obtained by different agents agree, as it would help to understand the obtained rankings.
%
However, to the best of our knowledge, there are only a few existing studies on the evaluation of the overall consensus degree for a set of rankings.
%
Quantifying the {\em consensus} of the obtained rankings is to provide an accurate measure about the overall agreement. It is also a quantitative indicator for comparing consensus between groups (e.g., two sets of rankings) \cite{Bosch2005}  or for further improving the ranking systems. For example, in group decision making, if  the consensus score is extremely low, it is necessary for the experts to adjust their rankings in order to reach an agreement \cite{kim1999interactive}.

Rank {\em correlation} or {\em distance} functions, such as the Kendall's $\tau$ \cite{kendall1938new} and the Spearman's $\rho$ \cite{spearman1904proof}, have been proposed to measure the correlation or  disagreement of two rankings, it is however difficult to use them to quantify the level of consensus for a set of rankings (more than two rankings) in a full picture. The Kendall's $\tau$ measures the correlation of two rankings by considering their concordant and discordant pairs, and the Spearman's $\rho$ evaluates the rank correlation by taking into account the positions of the items in two rankings. The Kemeny distance  \cite{kemeny1959mathematics} is extended to measure  the pairs of disagreed preferences in two rankings. 
A related concept is {\em cohesiveness}, which is used interchangeably for {\em consensus}.    Cohesiveness measures the similarity of preferences in a group. The most common existing approaches to measuring the similarity of preferences in a set of rankings need to calculate the similarity for each pair of rankings based on correlation functions and then aggregate the obtained results \cite{alcalde2013measuring}.  {\em Diversity} and  {\em cohesiveness} are considered as two opposite concepts of rankings in social choice theory \cite{karpov2017preference}. Research was carried out  to measure the diversity of a ranking set based on distance functions (see \cite{garcia2010consensus}) . However, these studies are far from sufficient in evaluating the overall consensus of a ranking set. In reality, it is often the case that certain preference patterns are embedded in most of the rankings  obtained for a task. The existing work cannot tell the degree to which preferences over candidates are shared by the majority of the rankings. In addition, they did not provide a solution to identifying the majority of rankings  in order to filter irrelevant results in the ranking set, which could play an important role in modern information systems.  For instance, when providing auto-suggestion queries in search engine,   
the suggested terms or queries must be as close to user's search intents as possible, thus it is important to remove the outlier queries resulting in low consensus rankings from the suggestion list in order to provide the users accurate search results.


%
%


This paper studies the consensus degree of a ranking set from a different perspective to provide a full picture on the degree to which a set of rankings mutually agree. This work proposes a novel framework to analyse the consensus of rankings by considering the
common patterns embedded in a ranking set.  A new concept of $q$-support patterns is introduced to represent how common patterns are embedded in rankings, by which the preferences of a group over candidates can be expressed at a subtle and fine-grained level. A pattern is regarded as a $q$-support pattern if it is included by at least $q$ rankings in the ranking set. Thus, a $q$-support pattern represents the partial coverage of the pattern by rankings, where the integer $q$ can be specified as needed when a ranking system is evaluated. The consensus of rankings is quantified based on the number of $q$-support patterns.
Compared with the existing work based on correlation or distance functions, this new approach gives a finer characterization and quantification of the commonalities embedded in the rankings.

The contribution of this paper includes: (1) a new representation of the commonality within a set of rankings -- $q$-support pattern is proposed; (2) a new framework (non-distance or non-correlation) for quantifying consensus with $q$-support patterns is introduced; (3) an efficient algorithm is developed to calculate consensus scores and characterize the set of $q$-support patterns; (4) consensus scores are defined for each ranking to reflect its relationship with the other rankings, which can be used to detect outliers in a ranking set; (5) extensive experiments have been conducted to show the effectiveness and usefulness of the proposed approach.

The rest of the paper is organized as follows. In Section \ref{section: Related work}, related work on the pairwise comparison of rankings and the measure of consensus and diversity of rankings is reviewed. In Section \ref{section: Formulation of consensus quantities},  the $q$-support pattern of rankings is formulated and consensus scores are defined based on it. An algorithm is then introduced to quantify ranking consensus. In Section \ref{section: Detecting outliers}, an outlier detection method is developed. 
In Section \ref{section: Weighted Consensus Quantifying}, weighted consensus scores are defined. Section \ref{section: Experiment} gives experimental studies to evaluate the proposed approach.
Section \ref{section: Conclusion} concludes this paper.

\section{Related work}
\label{section: Related work}
\textbf{Rank correlation and distance functions.} Historically developed by Maurice Kendall in 1938 \cite{kendall1938new}, Kendall's $\tau$ measures the  correlation between two rankings by considering the numbers of
pairwise items ranked in the same orders and in opposite orders. 
Suppose that we consider rankings over candidates $\{\sigma_1,\sigma_2,\cdots,\sigma_n \}$.  A ranking is an ordered list in which items in higher positions are more preferred than items in lower positions.  Let $\pi(\cdot, \cdot)$ be  the position function. The function $\pi(\sigma_i, \ranking_l)$ returns the position of  item $\sigma_i$ in ranking $\ranking_l$. The Kendall's $\tau$ for  two rankings  $\ranking_l$ and $\ranking_z$ is
\begin{eqnarray}
&&\tau(\ranking_l, \ranking_z) = 
\frac{\sum\limits_{\substack{i,j \in\{1,\cdots,n\} \\ i<j}} \text{sgn}(\pi(\sigma_i,  \ranking_l) - \pi(\sigma_j,  \ranking_l))
\text{sgn}(\pi( \sigma_i, \ranking_z) - \pi(\sigma_j,  \ranking_z))}{{n(n-1)}/{2}}.\nonumber
\end{eqnarray}
This coefficient  is in the range $-1 \leq \tau(\ranking_l, \ranking_z) \leq 1$, where value 1 correspons to the case that the two rankings are in the same order and value $-1$ indicates that one ranking is in the reverse order of the other. 

Spearman's $\rho$ proposed by Charles Spearman in 1904 \cite{spearman1904proof} is defined based on the position of each item in two rankings as follows
\begin{equation*}
\rho(\ranking_l, \ranking_z) =  \frac{\sum\limits_{i=1}^n (\pi(\sigma_i, \ranking_l) - \bar{\pi}_l) ( \pi(\sigma_i, \ranking_z)-\bar{\pi}_z)  }{\sqrt{\sum\limits_{i=1}^n  (\pi(\sigma_i, \ranking_l) - \bar{\pi}_l) ^2 \sum\limits_{i=1}^n  (\pi(\sigma_i, \ranking_z) -\bar{\pi}_z) ^2 }},
\end{equation*}
where $\bar{\pi}_l = \frac{1}{n}\sum\limits_{i=1}^n \pi(\sigma_i, \ranking_l)$ and $\bar{\pi}_z= \frac{1}{n}\sum\limits_{i=1}^n \pi(\sigma_i, \ranking_z)$. Similarly, this coefficient satisfies  $-1 \leq \rho(\ranking_l, \ranking_z)  \leq 1$.

These rank correlation functions do not take into account the varying relevance of ranked items in different positions. They are not suitable for evaluating the rankings where  items at the top of a ranking are much more important than those at the bottom 
\cite{fagin2003comparing}. Further studies on weighted rank correlation were carried out extensively based on these two functions  \cite{carterette2009rank,iman1987measure,kumar2010generalized,shieh1998weighted,vigna2015weighted,webber2010similarity,yilmaz2008new}. More reasonable variants of rank correlation functions were also proposed in the literature \cite{etesami2016maximal,hassanzadeh2014axiomatic,henzgen2015weighted,tan2015family}.

Distance metrics have been used to analyze ranking data. One of the most widely used distance functions to measure rankings is the Kemeny distance  \cite{kemeny1959mathematics}. It is defined as  
the sum of pairs where the ranking preferences disagree. One can refer to  \cite{baigent1987preference,nurmi2004comparison,eckert2011distance}
for more information about the commonly used distance metrics.

\textbf{Measuring consensus and diversity of rankings.} In existing studies, consensus and diversity of rankings are typically measured by making pairwise comparisons of the rankings and aggregating the comparison results. Thus, two key issues with these approaches are the utilization of proper comparison metrics and aggregation methods. A consensus measure was first proposed in \cite{Bosch2005} with simple axioms including unanimity, anonymity and neutrality.
Work \cite{garcia2010consensus} improved the study of \cite{Bosch2005} by considering weighted Kemeny distance. Extended work with more reasonable distance metrics was carried out \cite{alcalde2011measuring,garcia2011measuring,hashemi2014measuring,erdamar2014measuring}. In  \cite{karpov2017preference}, a generalization of work  \cite{alcalde2013measuring} was developed with
a geometric mean aggregator and the leximax comparison. 

 

\section{Quantifying consensus with $q$-support patterns}
\label{section: Formulation of consensus quantities}
This section first defines the $q$-support patterns and consensus scores of a ranking set. Then, an algorithm is presented to calculate the consensus scores by utilizing matrices to represent the $q$-support patterns. 

\subsection{$q$-support patterns}
Let $\mathcal{C}=\{\sigma_1,\sigma_2,\cdots,\sigma_n \}$  be a set of $n$ candidates to be ranked. 	 A ranking $\ranking_l=\left(r_{l_1}, r_{l_2}, \cdots, r_{l_m}\right)$ is an ordered list in which item $r_{l_i} \in \mathcal{C}$  is more preferred than item  $r_{l_j} \in \mathcal{C}$ for $i<j$.  Given two items $\sigma_{x}$ and $\sigma_{y} \in \mathcal{C}$, if there exists $i \leq j$ such that
$r_{l_i} = \sigma_{x}$ and $r_{l_j} = \sigma_{y}$, we write $\sigma_x \sigma_y   \sqsubset\ranking_l$; otherwise $\sigma_x \sigma_y \not \sqsubset\ranking_l$. Specially, if $\sigma_{x} = \sigma_{y}$, $\sigma_x\sigma_x   \sqsubset\ranking_l$ simply means that item $\sigma_x$ is included in ranking $\ranking_l$, also written as $\sigma_x  \sqsubset\ranking_l$.

It is usually the case that most of the rankings obtained for a task share certain commonality.  Suppose that there is a set of rankings $\rankings=\{\ranking_1 = (a,b,c,d,e,f), ~ \ranking_2 = (b,a,c,d,e,f), ~\ranking_3 = (a,b,c,e,d,f), ~\ranking_4 = (c, b,d,e,f,g)\}$. It can be seen that item $a$ and the pairwise item $bc$  are common patterns for most of the rankings, but not for all the rankings in $\rankings$ (e.g., $bc\sqsubset \ranking_1, bc\sqsubset \ranking_2, bc\sqsubset\ranking_3$, but  $bc\not\sqsubset\ranking_4$). These patterns, partially included in a set of rankings, show the extend to which the rankings agree. Therefore, it is necessary to consider these patterns to understand the consensus level in a set of rankings.  As such, we define the following $q$-support patterns for a ranking set.
\begin{definition}[$q$-support patterns]
Consider a set of $N$ rankings $\rankings=\{\ranking_1,\ranking_2,\dots, \ranking_N \}$ over candidate set $\mathcal{C}=\{\sigma_1,\sigma_2,\cdots, \sigma_n \}$. For $\sigma_{x}$ and $\sigma_{y} \in \mathcal{C}$, we have the following subset $\rankings'(\sigma_x,\sigma_y) \subseteq\rankings$ 
\begin{equation}
\label{eq:RankingSetIncludeItems}
\rankings'(\sigma_x,\sigma_y)  = \left\{ \ranking_z  | \sigma_x\sigma_y \sqsubset\ranking_z, ~ \ranking_z \in \rankings \right\}.
\end{equation}
Let $ q \in (0, N]$ be an integer. The pattern $\sigma_x\sigma_y$ is a $q$-support of $\rankings$, denoted by $\sigma_x\sigma_y\stackrel{q}{\sqsubset}\rankings$, if the size of $\rankings'(\sigma_x,\sigma_y)$ satisfies $|\rankings'(\sigma_x,\sigma_y) |\geq q$; otherwise $\sigma_x\sigma_y\not\stackrel{q}{\sqsubset}\rankings$. If 
$\sigma_{x} = \sigma_{y}$, $\sigma_x\sigma_x \stackrel{q}\sqsubset\rankings$ indicates that item $\sigma_x$ is a single $q$-support item of $\rankings$, also written as $\sigma_x \stackrel{q}\sqsubset\rankings$.
\end{definition}

The notation $\sigma_x\sigma_y \stackrel{q}\sqsubset\rankings$ means that $\sigma_x\sigma_y$ occurs in at least $q$ rankings in $\rankings$. We use  $\subseqs_1(q)$ and $\subseqs_2(q)$ to respectively denote the set of the single $q$-support items and the set of the pairwise $q$-support patterns, i.e., 
\begin{eqnarray}
\subseqs_1(q) \!&\! = \!&\! \left\{ \sigma_x |\sigma_x \stackrel{q}{\sqsubset}\rankings, ~\sigma_x \in \mathcal{C} \right \}\\
\subseqs_2(q) \!&\! = \!&\! \left\{ \sigma_x\sigma_y |\sigma_x\sigma_y \stackrel{q}{\sqsubset}\rankings, ~\sigma_x \neq \sigma_y, \sigma_x \in \mathcal{C},\sigma_y \in \mathcal{C} \right \}.
\end{eqnarray}
The set $\subseqs_1(q)$ is important in the evaluation of incomplete rankings, where not all the candidates under consideration are ranked in the rankings. It gives the items with more preferences among the candidates, which are ranked in at least $q$ rankings. 
The set  $\subseqs_2(q)$ collects the most preference orders of the items in $\subseqs_1(q)$.


\subsection{Consensus scores}
The $q$-support patterns describe how common patterns are embedded in rankings. This section first defines individual consensus scores for a ranking $\ranking_l \in \rankings$ based on the $q$-support patterns.
Then, the overall consensus scores are introduced for the ranking set $\rankings$. The relative consensus degree that a ranking  $\ranking_l$ shares with the others can be revealed by the individual and the overall consensus scores. In Section \ref{section: Detecting outliers}, it shows that this information can be used in the detecttion of an outlier from a ranking set. 

The following individual consensus scores are defined for a ranking $\ranking_l$. 
\begin{definition}[Individual consensus scores] 
\label{def:IndividualConsensusScore}
For an arbitrary  ranking $\ranking_l=\left(r_{l_1}, r_{l_2}, \cdots, r_{l_m}\right) \in \rankings$,  the sets of the single $q$-support items and the pairwise $q$-support  patterns are defined as
\begin{eqnarray}
\label{eq:setSingleqSupportItems}
\subseqs_1^{\ranking_l}(q) \!&\!=\!&\! \left\{ r_{l_i}  |r_{l_i} \stackrel{q}{\sqsubset}\! \rankings, ~ i, \!\in\! \{1, 2, \cdots, m\}\right\}\\
\label{eq:setqSupportPatterns}
\subseqs_2^{\ranking_l}(q) \!&\!=\!&\! \left\{ r_{l_i}r_{l_j}  |r_{l_i} r_{l_j} \stackrel{q}{\sqsubset}\! \rankings, ~ i,j  \!\in\! \{1, 2, \cdots, m\}, i \! < \! j\right\}.
\end{eqnarray}
The individual consensus scores of $\ranking_l$ are
\begin{eqnarray}
\kappa_1^{\ranking_l}(q)\!\!\!\!\!\!&&\!\! = \frac{1}{N_1^{\ranking_l}  } |\subseqs_1^{\ranking_l}(q)| 
\label{eq:individualK1}\\
\kappa_2^{\ranking_l}(q)\!\!\!\!\!\!&&\!\! = \frac{1}{N_2^{\ranking_l}}   |\subseqs_2^{\ranking_l}(q)|.
\label{eq:individualK2}
\end{eqnarray}
where 
$N_1^{\ranking_l}  = m$ and $N_2^{\ranking_l} = \frac{m(m-1)}{2}$ respectively  represent the number of the ranked items and the pairwise patterns of $\ranking_l$.
\end{definition}

\begin{definition}[Overall consensus scores]
\label{def:OverallConsensusScore}
For a ranking set $\rankings$ with the individual consensus scores defined as Eqs. (\ref{eq:individualK1}) and (\ref{eq:individualK2}),  the overall consensus scores of $\rankings$ are
\begin{eqnarray}
\label{eq:overallK1score}
\bar{\kappa}_1(q) &=&  \frac{1}{N}\sum_{l = 1}^N \kappa_1^{\ranking_l}(q) \\
\label{eq:overallK2score}	
\bar{\kappa}_2(q) &=&  \frac{1}{N}\sum_{l=1}^N \kappa_2^{\ranking_l}(q).
\end{eqnarray}
\end{definition}

The individual consensus scores measure the proportions of the preference patterns of $\ranking_l$ embedded in at least $q$ rankings, where $\kappa_1^{\ranking_l}(q)$ measures the consensus in terms of single $q$-support items and $\kappa_2^{\ranking_l}(q)$ measures the consensus in terms of pairwise $q$-support patterns. The overall consensus scores give the average proportions and they are used to evaluate the consensus degree of a whole ranking set. They have the following property.

\noindent\textbf{Property 1.} The overall consensus scores satisfy
\begin{eqnarray}
&&0 \leq \bar{\kappa}_1(q) \leq 1\\
&&0 \leq \bar{\kappa}_2(q) \leq 1.
\end{eqnarray}
The score $\bar{\kappa}_1(q)=0$  if and only if arbitrary $q$ rankings in $\rankings$ share no common item, and $ \bar{\kappa}_1(q)=1$ if and only if every ranked items of all the rankings is shared by at least $q$ rankings. Similarly, $\bar{\kappa}_2(q)=0$  if and only if  arbitrary $q$ rankings in $\rankings$ share no common pairwise pattern, and $ \bar{\kappa}_2(q)=1$ if and only if every pairwise preference pattern of all the rankings is embedded in at least $q$ rankings. 

%
%
%
%
%
%

\subsection{An efficient algorithm to quantify  consensus}
\label{subsection:Consensus evaluation of rankings}
In this section, a matrix representation is introduced to represent the $q$-support patterns, shown in Theorem \ref{theoremAmatrix}, which implies  an algorithm for calculating the consensus scores.   

\begin{theorem}
\label{theoremAmatrix}
Consider a set of $N$ rankings $\rankings=\{\ranking_1,\ranking_2,\dots,\ranking_N \}$ over candidates  $\mathcal{C}=\{\sigma_1,\sigma_2,\cdots,\sigma_n \}$. For a ranking $\ranking_l = \left(r_{l_1}, r_{l_2}, \cdots, r_{l_m}\right) \in \rankings$ and $\forall \ranking_z = (r_{z_1}, r_{z_2}, \cdots, r_{z_u}) \in \rankings$, with the position function
\begin{equation}
\label{eq:PositionFunction}
\pi\left(r_{l_i}, \ranking_z\right) =
\begin{cases}
0, & \text{if}~ r_{l_i}\not\sqsubset  \ranking_z  \\
p, & \text{if}~ r_{l_i} = r_{z_p}
\end{cases}	
\end{equation}	
and  the Heaviside function 
\begin{equation} \label{eq:Heaviside}
H(x)=
\begin{cases}
1, & \text{if}~ x> 0  \\
0, & \text{otherwise},
\end{cases}
\end{equation} 
we define
\begin{eqnarray}
\label{eq:fFunctionCount}
 f(r_{l_i}, r_{l_j})= \!\!\!\!\!\!\!\!\!\!\!\!&&~\begin{cases}
\sum\limits_{z=1}^{N}{H \left(\pi(r_{l_i}, \ranking_z)\right),~~~~~~~~~~~~~~~~~~~~~~~~~~~~~~  \text{if}~ i =j}\\	
\sum\limits_{z=1}^{N}{ H  \left(\pi(r_{l_j}, \ranking_z) - \pi(r_{l_i}, \ranking_z)\right)  H\left( \pi(r_{l_i}, \ranking_z)\right),   \text{otherwise}}
\end{cases}		
\end{eqnarray}
and  matrix $\mathbf{A}^{\ranking_l}  = \left(A^{\ranking_l}[j,i]\right)\in\mathbb{R} ^{m\times m}$ as {
\begin{equation} \label{eq:a_ij}
A^{\ranking_l}[j,i] = 
\begin{cases}
1,  & \text{if} ~i \leq j~  \text{and}~ f(r_{l_i}, r_{l_j}) \geq q\\
0,  & \text{otherwise}.
\end{cases}
\end{equation}}
Then, we have
\begin{eqnarray}	
\label{eq:k1Calcul}
\kappa_{1}^{\ranking_l}(q) &=&  \frac{1}{N_1^{\ranking_l}} \text{tr}\left(\mathbf{A}^{\ranking_l}\right)\\
\label{eq:k2Calcul}
\kappa_{2}^{\ranking_l}(q) &=&  \frac{1}{N_2^{\ranking_l} } \left( \mathbf{e}^T \mathbf{A}^{\ranking_l} \mathbf{e} - \text{tr}\left(\mathbf{A}^{\ranking_l}\right)\right),
\end{eqnarray}	
where $\mathbf{e} = [1, 1, \cdots, 1]^T$ is an $m$-row vector of all ones. 	
\end{theorem}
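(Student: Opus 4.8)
The plan is to show that the integer $f(r_{l_i},r_{l_j})$ of (\ref{eq:fFunctionCount}) is exactly the number of rankings in $\rankings$ that contain the pattern $r_{l_i}r_{l_j}$ (respectively the single item $r_{l_i}$ when $i=j$); once this is in hand, $\mathbf{A}^{\ranking_l}$ is just a $0/1$ encoding of the membership tests ``$r_{l_i}r_{l_j}\stackrel{q}{\sqsubset}\rankings$'', and (\ref{eq:k1Calcul})--(\ref{eq:k2Calcul}) follow by reading off its trace and the sum of its entries.

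First I would fix a ranking $\ranking_z=(r_{z_1},\dots,r_{z_u})\in\rankings$ and examine the $z$-th summand of $f$. When $i=j$, the term $H(\pi(r_{l_i},\ranking_z))$ equals $1$ exactly when $\pi(r_{l_i},\ranking_z)>0$, i.e.\ when $r_{l_i}\sqsubset\ranking_z$; summing over $z$ and comparing with (\ref{eq:RankingSetIncludeItems}) gives $f(r_{l_i},r_{l_i})=|\rankings'(r_{l_i},r_{l_i})|$. When $i\neq j$ I would split into three cases. If $r_{l_i}\not\sqsubset\ranking_z$ then $\pi(r_{l_i},\ranking_z)=0$ and the factor $H(\pi(r_{l_i},\ranking_z))$ kills the term. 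If $r_{l_i}\sqsubset\ranking_z$ but $r_{l_j}\not\sqsubset\ranking_z$, then $\pi(r_{l_j},\ranking_z)-\pi(r_{l_i},\ranking_z)=-\pi(r_{l_i},\ranking_z)\le -1<0$, so $H(\pi(r_{l_j},\ranking_z)-\pi(r_{l_i},\ranking_z))$ kills the term. If both items appear in $\ranking_z$, then, since distinct items occupy distinct positions in a ranking, $H(\pi(r_{l_j},\ranking_z)-\pi(r_{l_i},\ranking_z))=1$ iff $\pi(r_{l_i},\ranking_z)<\pi(r_{l_j},\ranking_z)$, which is precisely the statement $r_{l_i}r_{l_j}\sqsubset\ranking_z$. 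Hence the $z$-th summand is the indicator of ``$r_{l_i}r_{l_j}\sqsubset\ranking_z$'', and summing over $z$ gives $f(r_{l_i},r_{l_j})=|\rankings'(r_{l_i},r_{l_j})|$. Therefore $f(r_{l_i},r_{l_j})\ge q$ iff $r_{l_i}r_{l_j}\stackrel{q}{\sqsubset}\rankings$, so by (\ref{eq:a_ij}), for $i\le j$ the entry $A^{\ranking_l}[j,i]$ equals $1$ iff $r_{l_i}r_{l_j}\stackrel{q}{\sqsubset}\rankings$ and is $0$ otherwise, while $A^{\ranking_l}[j,i]=0$ whenever $i>j$ (so $\mathbf{A}^{\ranking_l}$ is triangular).

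Next I would read off the scores. The diagonal entry $A^{\ranking_l}[i,i]$ is $1$ iff $r_{l_i}\stackrel{q}{\sqsubset}\rankings$, i.e.\ iff $r_{l_i}\in\subseqs_1^{\ranking_l}(q)$ by (\ref{eq:setSingleqSupportItems}); since the items $r_{l_1},\dots,r_{l_m}$ of a ranking are pairwise distinct, the map $i\mapsto r_{l_i}$ is injective, so $\mathrm{tr}(\mathbf{A}^{\ranking_l})=\sum_{i=1}^{m}A^{\ranking_l}[i,i]=|\subseqs_1^{\ranking_l}(q)|$, and dividing by $N_1^{\ranking_l}$ gives (\ref{eq:k1Calcul}). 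For the pairwise score, $\mathbf{e}^T\mathbf{A}^{\ranking_l}\mathbf{e}=\sum_{i,j}A^{\ranking_l}[j,i]$ is the number of index pairs $(i,j)$ with $i\le j$ and $r_{l_i}r_{l_j}\stackrel{q}{\sqsubset}\rankings$; subtracting $\mathrm{tr}(\mathbf{A}^{\ranking_l})$ removes the diagonal ($i=j$) contributions and leaves the number of pairs with $i<j$ and $r_{l_i}r_{l_j}\stackrel{q}{\sqsubset}\rankings$, which, again using distinctness of the $r_{l_i}$, is exactly $|\subseqs_2^{\ranking_l}(q)|$ by (\ref{eq:setqSupportPatterns}). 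Dividing by $N_2^{\ranking_l}=\tfrac{m(m-1)}{2}$ yields (\ref{eq:k2Calcul}).

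The only part that requires care is the case analysis in the second paragraph: one must check that the convention $\pi(\cdot,\ranking_z)=0$ for an item absent from $\ranking_z$ is never mistaken for a legitimate first position and can never make $\pi(r_{l_j},\ranking_z)-\pi(r_{l_i},\ranking_z)$ positive, so that the nested Heaviside factors genuinely count only those rankings in which $r_{l_i}$ occurs and precedes $r_{l_j}$. Everything else is bookkeeping with finite sums together with the injectivity of $i\mapsto r_{l_i}$.
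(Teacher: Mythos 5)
Your proposal is correct and follows essentially the same route as the paper: both arguments rest on showing that $f(r_{l_i},r_{l_j})$ counts the rankings containing the pattern (so $A^{\ranking_l}[j,i]$ encodes $q$-support), and then read $\kappa_1^{\ranking_l}(q)$ off the trace and $\kappa_2^{\ranking_l}(q)$ off the total entry sum minus the trace. The only difference is that you spell out the Heaviside case analysis and the injectivity of $i\mapsto r_{l_i}$, which the paper leaves implicit.
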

\begin{proof}
By Eq. (\ref{eq:PositionFunction}), it can be known that $\pi\left(r_{l_i}, \ranking_z\right)$ gives the position of item $r_{l_i}$ in $\ranking_z$. From the definition of $ f(r_{l_i}, r_{l_j})$, it can be seen that 
	$ f(r_{l_i}, r_{l_j})$ counts the number of rankings $ \forall \ranking_z\in  \rankings$ satisfying $r_{l_i} r_{l_j} \sqsubset\ranking_z$. Thus, the entry $A^{\ranking_l}[j,i]=1$ represents  $r_{l_i} r_{l_j} \stackrel{q}{\sqsubset}\! \rankings$. Moreover, note that $\mathbf{e}^T \mathbf{A}^{\ranking_l} \mathbf{e}$ gives the sum of the all entries in matrix $\mathbf{A}^{\ranking_l}$. Therefore, the result of (\ref{eq:k1Calcul}) and (\ref{eq:k2Calcul}) can be further obtained based on Definition \ref{def:IndividualConsensusScore}.	
\end{proof}

The matrix $\mathbf{A}^{\ranking_l}$ provides a proper representation of the $q$-support patterns in $\ranking_l$. This representation can further facilitate the analysis of the commonality that individual rankings share with the others. Based on Theorem \ref{theoremAmatrix}, we develop Algorithm \ref{algo:ConsensusScore} to calculate the consensus scores and characterize the $q$-support patterns more efficiently.
\begin{algorithm}[!t]
\caption{Quantifying consensus with matrix representation}
\label{algo:ConsensusScore}
\SetAlgoLined
\KwData{A set of rankings $\rankings$, the value of $q$}
\KwResult{$\kappa_{1}^{\ranking_l}(q),  \kappa_{2}^{\ranking_l}(q), l = 1, 2, \cdots, N$; $\subseqs_1^{\ranking_l}(q)$, $\subseqs_2^{\ranking_l}(q)$; $\bar{\kappa}_{1}(q),  \bar{\kappa}_{2}(q)$; $\subseqs_1(q)$, $\subseqs_2(q)$}

Initialize $\mathbf{A}^{\ranking_l}, l = 1, 2, \cdots, N$ with zero matrices 

\For{$l=1$ to $N$}{
  $m$ $\leftarrow$ Length of $\ranking_l$
  
    \For{$i=1$ to $m$}{
   	   \For{$j=i$ to $m$}{
	   	   $f(r_{l_i}, r_{l_j}) = 0$\\
	       \uIf{$\left(l\!>\!1, N\!-\!l\!+\!1\geq q, \exists x\! \in \![1,l \!-\!1]\right)$ or $\left(l\!>\!1, N\!-\!l\!+\!1<q, \exists x\! \in \![1, N\!-\!q\!+\!1]\right)$ such that    $r_{l_i} r_{l_j}\sqsubset  \ranking_x    $} {
			    $A^{\ranking_l}[j,i]=A^{\ranking_x}[\pi(r_{l_j},\ranking_x), \pi(r_{l_i},\ranking_x)]$
         
		         continue
	       }
	       \ElseIf{$N-l+1\geq q$}
	           {\For{$z=l$ to $N$}{
		           Calculate $\pi(r_{l_i}, \!\ranking_z), \pi(r_{l_j},\ranking_z)$ by Eq. (\ref{eq:PositionFunction})
	           
		           Calculate 
		           	\begin{eqnarray}
					f(r_{l_i}, r_{l_j}) += \begin{cases}
					H(\pi(r_{l_i}, \ranking_z)) , ~~~~~~~~~~~~~~~~~~~~~~~~~~~~~~~~\text{if}~ i =j\\	
					H(\pi(r_{l_j}, \ranking_z) \!-\! \pi(r_{l_i}, \ranking_z)) 
			H(\pi(r_{l_i}, \ranking_z)), ~~ \text{otherwise}\nonumber
					\end{cases}
					\end{eqnarray}
					
					\uIf{$N-z+f(r_{l_i}, r_{l_j})<q$} {
						break				
                   }
       	
				}
			}
			\uIf{$f(r_{l_i}, r_{l_j})\geq q$} {
			$A^{\ranking_l}[j,i] = 1$
			}
		}	
	}	

   Calculate $\kappa_{1}^{\ranking_l}(q), \kappa_{2}^{\ranking_l}(q)$ by  Eqs. \eqref{eq:k1Calcul} and \eqref{eq:k2Calcul}
   
   Get $\subseqs_1^{\ranking_l}(q), \subseqs_2^{\ranking_l}(q)$ based on $\mathbf{A}^{\ranking_l}$

}

Calculate $\bar{\kappa}_1(q), \bar{\kappa}_2(q)$ by Eqs. (\ref{eq:overallK1score}) and (\ref{eq:overallK2score})

Get $\subseqs_1(q), \subseqs_2(q)$ by $\subseqs_1(q) = \displaystyle \mathop{\cup}_{\ranking_l \in \rankings}^{} \subseqs_1^{\ranking_l}(q), ~ \subseqs_2(q) = \displaystyle \mathop{\cup}_{\ranking_l \in \rankings}^{} \subseqs_2^{\ranking_l}(q)$

\textbf{return} \{$\kappa_{1}^{\ranking_l}(q),  \kappa_{2}^{\ranking_l}(q), l = 1, 2, \cdots, N$; $\subseqs_1^{\ranking_l}(q), \subseqs_2^{\ranking_l}(q)$; $\bar{\kappa}_{1}(q),  \bar{\kappa}_{2}(q)$; $\subseqs_1(q), \subseqs_2(q)$\}
\end{algorithm}
Suppose $q \!= \! \lceil \!\frac{2N}{3}\! \rceil$, which means that we consider $r_{l_i} r_{l_j}$ as a common pattern if it is contained by at least two third of the rankings.  For $\ranking_l, l \!> \!\lfloor\!\frac{N}{3}\!\rfloor \!+ \!1$, if $r_{l_i} r_{l_j}$ is not included by one of the first $\lfloor\!\frac{N}{3} \!\rfloor\!+\!1$ rankings of the ranking set,  $A^{\ranking_l} \! [j,i]$ must be zero. If  $r_{l_i} r_{l_j}$ is a $\lceil\! \frac{2N}{3} \!\rceil$-support pattern, it must be included by one of the rankings $\ranking_x \! \in \! \{\ranking_1,\ranking_2,\dots,\ranking_{\lfloor \! \frac{N}{3} \!\rfloor \!+\!1}\}$. Thus, we do not need to calculate $A^{\ranking_l}[j,i]$ by always checking all the rankings. Line 7 in Algorithm \ref{algo:ConsensusScore} checks if $r_{l_i} r_{l_j}$ of $\ranking_l$ is included by a ranking $\ranking_x $ for which matrix $\mathbf{A}^{\ranking_x}$ has already been constructed. If the number of the rankings whose corresponding matrix is not constructed is greater than $q$, we look for $\ranking_x$  in the considered rankings   $\{\ranking_1,\ranking_2,\dots,\ranking_{l\!-\!1} \}$;  otherwise we only check if there is an $\ranking_x$ in the first $N\!-\!q\!+\!1$ rankings.  As shown in Lines 8 and 9, if $r_{l_i} r_{l_j}$ has been considered in a constructed matrix for $\ranking_x$, it is not necessary to recalculate the corresponding entry of the current matrix $\mathbf{A}^{\ranking_l}$  and the entry is equal to that of $\mathbf{A}^{\ranking_x}$ corresponding to the pattern. Otherwise, as in Line 10,  only when  the number of the rankings $\{\ranking_l,\ranking_{l+1},\dots,\ranking_N \}$ is no less than $q$,   $r_{l_i} r_{l_j}$ has the possibility to be a $q$-support pattern. In this way, the computation cost can be significantly reduced.  From Lines 11 to 16, $f(\!r_{l_i}, r_{l_j}\!)$ accumulates the number of rankings containing $r_{l_i} r_{l_j}$. To further improve the computation efficiency,  the  sum of $f(\!r_{l_i}, r_{l_j}\!)$ and the number of the remaining rankings is checked during the accumulation process. If it is less than $q$,  then $r_{l_i} r_{l_j}$ has no chance to be a $q$-support pattern and there is no need to check if the remaining rankings contain $r_{l_i} r_{l_j}$.

The following example shows how the matrix representation can be used to evaluate the ranking consensus. 
\begin{example}
\label{example}
Consider a set of rankings $\rankings\!=\!\{ \ranking_1\! =\! (a, \!b, \!c, \!d,  \!e, \!f), \ranking_2 \!=\! ( b,  \! c,  \!d,  \!e,   \!f,  \!a  ),
\ranking_3 \!=\! (b,  \!d,  \!a,  \!g,  \!h,  \!f), \ranking_4 \!=\! (b,a,c,d,f,e)\}$ over candidates $\{a,  \!b,  \! c,  \!d,  \!e, \! f,  \!g,  \!h\}$, and let $q=3$. We have 
\begin{equation*}	
	\mathbf{A}^{\ranking_1}
	\!=\! \bordermatrix{~ \!&\! a \!&\! b \!&\! c \!&\! d \!&\!e \!&\!f\!\cr
		a \!&\! 1\!&\! 0 \!&\! 0 \!&\! 0 \!&\! 0 \!&\! 0\! \cr
		b \!&\! 0    \!&\! 1 \!&\! 0 \!&\! 0 \!&\! 0 \!&\! 0 \!\cr
		c \!&\! 0 \!&\! 1 \!&\! 1 \!&\! 0 \!&\! 0 \!&\! 0 \!\cr
		d \!&\! 0 \!&\! 1 \!&\! 1 \!&\! 1 \!&\! 0 \!&\! 0\! \cr
		e \!&\! 0 \!&\! 1 \!&\! 1 \!&\! 1 \!&\! 1 \!&\! 0 \!\cr
		f \!&\! 1 \!&\! 1 \!&\! 1 \!&\! 1 \!&\! 0 \!&\! 1\! \cr}, ~
		\mathbf{A}^{\ranking_2}
	\!=\! \bordermatrix{~ \!&\! b \!&\!c \!&\!d \!&\!e \!&\!f \!&\!a\!\cr
		b \!&\! 1 \!&\! 0 \!&\! 0 \!&\! 0 \!&\! 0 \!&\! 0 \!\cr
		c \!&\! 1 \!&\! 1 \!&\! 0 \!&\! 0 \!&\! 0 \!&\! 0\! \cr
		d \!&\! 1 \!&\! 1 \!&\! 1 \!&\! 0 \!&\! 0 \!&\! 0 \!\cr
		e \!&\! 1 \!&\! 1 \!&\! 1 \!&\! 1 \!&\! 0 \!&\! 0\! \cr
		f \!&\! 1 \!&\! 1 \!&\! 1\!&\! 0 \!&\! 1 \!&\! 0\! \cr
		a \!&\! 1 \!&\! 0 \!&\! 0 \!&\! 0 \!&\! 0 \!&\! 1\! \cr} 
	\end{equation*}	
	\begin{equation*}	
	\mathbf{A}^{\ranking_3}
	\!=\! \bordermatrix{~ \!&\! b \!&\! d \!&\! a \!&\! g \!&\! h \!&\! f \!\cr
		b \!&\! 1 \!&\! 0 \!&\! 0 \!&\! 0 \!&\! 0 \!&\! 0  \!\cr
		d\!&\! 1 \!&\! 1 \!&\! 0 \!&\! 0 \!&\! 0 \!&\! 0 \!\cr
		a \!&\! 1 \!&\! 0 \!&\! 1 \!&\! 0 \!&\! 0 \!&\! 0\!\cr
		g \!&\! 0 \!&\! 0 \!&\! 0 \!&\! 0 \!&\! 0 \!&\! 0 \!\cr
		h \!&\! 0 \!&\! 0 \!&\! 0 \!&\! 0 \!&\! 0 \!&\! 0 \!\cr
		f \!&\! 1 \!&\! 1 \!&\! 1 \!&\! 0 \!&\! 0 \!&\! 1\! \cr}, ~
 		\mathbf{A}^{\ranking_4}
	\!=\! \bordermatrix{~ \!&\! b \!&\!a \!&\!c \!&\!d \!&\!f \!&\!e\!\cr
		b \!&\! 1 \!&\! 0 \!&\! 0 \!&\! 0 \!&\! 0 \!&\! 0 \!\cr
		a \!&\! 1 \!&\! 1 \!&\! 0 \!&\! 0 \!&\! 0 \!&\! 0\! \cr
		c \!&\! 1 \!&\! 0 \!&\! 1 \!&\! 0 \!&\! 0 \!&\! 0 \!\cr
		d \!&\! 1 \!&\! 0 \!&\! 1 \!&\! 1 \!&\! 0 \!&\! 0\! \cr
		f \!&\! 1 \!&\! 1 \!&\! 1 \!&\! 1 \!&\! 1 \!&\! 0 \!\cr
		e \!&\! 1 \!&\! 0 \!&\! 1 \!&\! 1 \!&\! 0 \!&\! 1 \!\cr}.
	\end{equation*}	
By Eq. (\ref{eq:k1Calcul}) and Eq. (\ref{eq:k2Calcul}), the following result can be obtained
\begin{center}
			\begin{tabular}{|c|c|c|c|c|}
				\hline
				 $l$ & $1$ & $2$ & $3$ & $4$ \\
				  \hline
				  $\kappa_{1}^{\ranking_l}(3) $  & 1.00 & 1.00 & 0.67 & 1.00 \\
				  \hline
				  $\kappa_{2}^{\ranking_l}(3) $  & 0.67 & 0.67 & 0.33 & 0.73  \\
				  \hline
			\end{tabular}
\end{center}

\noindent The overall consensus scores are
\begin{equation*}	
\bar{\kappa}_1(3) = 0.92,~
\\\bar{\kappa}_2(3) = 0.60.
\end{equation*}	
Since $\mathbf{A}^{\ranking_l}[j,i]$ represents	if $r_{l_i} r_{l_j}$ is a $q$-support pattern, it can be known
$\subseqs_1^{\ranking_1}(3) \! = \! \{a, b, c, d, e, f\}, ~ \subseqs_2^{\ranking_1}(3) \! = \! \{af, bc, bd, be, bf, cd, \\ ce, cf, de, df \} $, $\subseqs_1^{\ranking_2}(3) \!=\! \{b, c,  d, e, f, a\}, \subseqs_2^{\ranking_2}(3) \!=\! \{ bc, bd, be, bf, ba, cd, ce, cf, de, df\}, ~ \subseqs_1^{\ranking_3}(3) \!= \!\{b, d,  a,  f\}, ~\subseqs_2^{\ranking_3}(3) \!= \!\{ bd, ba, \\ bf, df, af \}, ~\subseqs_1^{\ranking_4}(3)  = \{b, a, c, d, f, e\}, ~ \subseqs_2^{\ranking_4}(3)  = \{ ba, bc, bd, bf, be, af, cd, cf, ce, df, de\}$. Furthermore, the sets of  $q$-support patterns of the whole ranking set are $\subseqs_1(3)  \!=\!  \subseqs_1^{\ranking_1}(3) \cup \subseqs_1^{\ranking_2}(3) \cup \subseqs_1^{\ranking_3}(3) \cup \subseqs_1^{\ranking_4}(3)  \!=\! \{a, b, c, d, e, f\}, ~ \subseqs_2(3)  \!=\!  \subseqs_2^{\ranking_1}(3) \cup \subseqs_2^{\ranking_2}(3) \cup \subseqs_3^{\ranking_3}(3) \cup \subseqs_4^{\ranking_4}(3)  \!=\!  \{ af, ba, bc,bd, be, bf, cd, ce, cf, de, df\}$. 		
\end{example}


\section{Quantifying consensus with consideration of positions and position gaps}
\label{section: Weighted Consensus Quantifying}
The rank positions of an item and the position gaps of pairwise items may be significantly different in a ranking set. Consider the items $a$ and $f$ in Example \ref{example}. The rank positions of item $a$ are
$\pi(a, \ranking_1) \! = \!  1, \pi(a, \ranking_2)  \! = \!  6, \pi(a, \ranking_3)  \! = \!  3, \pi(a, \ranking_4)  \! = \!  2$ and the position gaps of the two items are $\pi(f, \ranking_1)  \! - \!  \pi(a, \ranking_1)  \! = \!  5, \pi(f, \ranking_3) \!  - \!  \pi(a, \ranking_3) \!  = \!  3, \pi(f, \ranking_4) \!  -  \! \pi(a, \ranking_4) \!  = \!  3$. These differences influence the ranking consensus. However, the consensus scores defined in the previous section only involve the existence of $q$-support patterns. To reflect the importance of these position and gap information, 
the following definition presents an extension to Eqs. \eqref{eq:individualK1} and \eqref{eq:individualK2},  for quantifying the consensus of a ranking set more effectively.  
\begin{definition}[Weighted individual consensus scores]
	The weighted consensus scores of ranking $\ranking_l\in\rankings$ are
	\begin{eqnarray}
		\!\!\!\!\!\!\kappa_1^{\ranking_l}(q)\!\!\!\!\!&&= \frac{1}{N_1^{\ranking_l}} \sum_{r_{l_i} \in \subseqs_1^{\ranking_l}(q)} \gamma^{h(r_{l_i}, \ranking_l)} \label{eq:weight:individual:k1}\\
		\!\!\!\!\!\!\kappa_2^{\ranking_l}(q)\!\!\!\!\!&&= \frac{1}{N_2^{\ranking_l}} \sum_{r_{l_i} r_{l_j}\in \subseqs_2^{\ranking_l}(q)} \lambda^{d(r_{l_i}, r_{l_j}, \ranking_l)}, \label{eq:weight:individual:k2}
		\end{eqnarray}	
where the constants $0\!<\!\gamma \! \leq \! 1$ and $0\! <\! \lambda \!\leq \! 1$ are the weights, $h(r_{l_i}, \ranking_l)$ is the deviation of the position of $r_{l_i}$ in $\ranking_l$ from its average position in the ranking set, and $d(r_{l_i}, r_{l_j}, \ranking_l)$ is the deviation of the position gaps between  $r_{l_i}$ and $r_{l_j}$ in $\ranking_l$ from the average.  
\end{definition}

The deviations  $h(r_{l_i}, \ranking_l)$  and $d(r_{l_i}, r_{l_j}, \ranking_l)$ are calculated as follows. 
For ranking $\ranking_l \in \rankings$, we have the sets $\subseqs_1^{\ranking_l}(q)$ and $\subseqs_2^{\ranking_l}(q)$ of the $q$-support patterns defined as Eqs. (\ref{eq:setSingleqSupportItems}) and (\ref{eq:setqSupportPatterns}), the function $f(r_{l_i}, r_{l_j})$ in the form of Eq. (\ref{eq:fFunctionCount}), and the subset $\rankings'(r_{l_i}, r_{l_j})$ of $\rankings$ containing pattern $r_{l_i} r_{l_j}$ as Eq. (\ref{eq:RankingSetIncludeItems}). The average position of item $r_{l_i}$ in the ranking set is defined as
\begin{equation}
\bar{\pi}(r_{l_i}) = \frac{1}{f(r_{l_i}, r_{l_i})} \sum_{\ranking_z \in \rankings'(r_{l_i}, r_{l_i})} \pi(r_{l_i}, \ranking_z).
\end{equation}
The deviation $ h(r_{l_i}, \ranking_l)$ is
\begin{equation}
\label{eq:postionVar}
h(r_{l_i}, \ranking_l) =|\pi(r_{l_i}, \ranking_l) - \bar{\pi}(r_{l_i})|.
\end{equation}
The position gap between $r_{l_i}$ and $r_{l_j}$ in ranking $\ranking_z$ is
\begin{equation}
\omega(r_{l_i}, r_{l_j}, \ranking_z)= \pi(r_{l_j}, \ranking_z) - \pi(r_{l_i}, \ranking_z).
\end{equation} 
The average position gap of $r_{l_i}$ and $r_{l_j}$ in the ranking set is defined as
\begin{equation}
\bar{\omega}(r_{l_i}, r_{l_j}) = \frac{1}{f(r_{l_i}, r_{l_j})} \sum_{\ranking_z \in \rankings'(r_{l_i},r_{l_j})} \omega(r_{l_i}, r_{l_j}, \ranking_z).
\end{equation}
The deviation $d(r_{l_i}, r_{l_j}, \ranking_l)$ is
\begin{equation}
\label{eq:gapVar}
d(r_{l_i}, r_{l_j}, \ranking_l) =  |\omega(r_{l_i}, r_{l_j}, \ranking_l)-\bar{\omega}(r_{l_i}, r_{l_j})|. \nonumber
\end{equation}
From the definition, it can be known that smaller values of $\gamma$ and $\lambda$ reflect greater impact of the deviations of item  positions and position gaps in rankings on the consensus scores. It is worth noting that the consensus scores defined in the previous section are a special case of the weighted consensus scores with $\gamma = 1, \lambda =1$. Here, we  do not need to make any change to  the overall consensus scores defined in Definition \ref{def:OverallConsensusScore}.

To calculate the weighted consensus scores with the matrix representation, Eq. (\ref{eq:a_ij}) in Theorem 1 is changed to 
\begin{equation} 
\label{eq:AwithWeighting}
	A^{\ranking_l}[j,i] \!= \!
	\begin{cases}
	\gamma^{h(r_{l_i}, \ranking_l)},  & \! \!\!\text{if} ~i= j~  \text{and}~ f(r_{l_i}, r_{l_j}) \! \geq\! q\\
	 \lambda^{d(r_{l_i}, r_{l_j}, \ranking_l)},  &\!\!\! \text{if} ~i< j~  \text{and}~ f(r_{l_i}, r_{l_j}) \!\geq\! q\\
	\!0,  & \text{otherwise}.
	\end{cases}
	\end{equation}
Small change will be needed in Algorithm \ref{algo:ConsensusScore}. We follow the steps of Algorithm \ref{algo:ConsensusScore} and change the way to calculate $A^{\ranking_l}[j,i]$ in Line 8 to the following form
\begin{equation*} 
 A^{\ranking_l}[j,i]=
 	\begin{cases}
     H\left(A^{\ranking_x}[\pi(r_{l_j},\ranking_x), \pi(r_{l_i},\ranking_x)]\right) \gamma^{h(r_{l_i}, \ranking_l)}, & \text{if} ~i = j\\
      H\left(A^{\ranking_x}[\pi(r_{l_j},\ranking_x), \pi(r_{l_i},\ranking_x)]\right) \lambda^{d(r_{l_i}, r_{l_j}, \ranking_l)}, & \text{if} ~i < j.
     \end{cases}
\end{equation*}
Line 19 is replaced by
\begin{equation*}
	A^{\ranking_l}[j,i] \!= \!
	\begin{cases}
	\gamma^{h(r_{l_i}, \ranking_l)},  & \! \!\!\text{if} ~i= j \\
	 \lambda^{d(r_{l_i}, r_{l_j}, \ranking_l)},  &\!\!\! \text{if} ~i< j,
	 \end{cases}
\end{equation*}
and meanwhile the average position $\bar{\pi}(r_{l_i})$ or the average position gap $\bar{\omega}(r_{l_i}, r_{l_j})$ is recorded in here for further use in Line 8.

\begin{remark}[Rankings with ties]
Rankings with ties are used in the case that the preferences over some items are identical.  Let $\mathbf{r}_z = (\mathcal{T}_{z_1},\mathcal{T}_{z_2}, \cdots, \mathcal{T}_{z_n})$ be a ranking with ties, where $\mathcal{T}_{z_i}, i \in [1,n]$ is a set of items with identical preference. For $i<j$, every item in $\mathcal{T}_{z_i}$ is more preferred than all the items in $\mathcal{T}_{z_j}$. The proposed approach can be extended to rankings with ties by making small change to the position function. Specifically, we can replace Eq. (\ref{eq:PositionFunction}) with
\begin{equation*}
\pi\left(r_{l_i}, \ranking_z\right) =
\begin{cases}
p, & \text{if}~ r_{l_i} \in \mathcal{T}_{z_p} \\
0, & \text{otherwise}
\end{cases}	
\end{equation*}
to make the approach applicable to evaluate the consensus of rankings with ties. 
\end{remark}

%
%

\section{Detecting outliers}
\label{section: Detecting outliers}

The individual consensus scores $\kappa_{1}^{\ranking_l}(q)$ and $\kappa_{2}^{\ranking_l}(q)$ directly reflect the (weighted) numbers of $q$-support patterns that $\ranking_l$ shares with the other rankings in $\rankings$. For instance, ranking $\ranking_3$ in Example \ref{example} shares less $3$-support patterns with the others, thus it has much lower consensus scores. This can be used to detect outlier rankings, which have low consensus with most rankings. The following outlier detection method is naturally developed from the consensus quantifying approach.

Consider a ranking set $\rankings$ with overall consensus scores $\bar{\kappa}_{1}(q)$ and $\bar{\kappa}_{2}(q)$ for a given $q$. Define the relative deviations of the individual consensus scores of ranking $\ranking_l \in \rankings$ from the overall consensus scores as
\begin{eqnarray}	
v_{1}^{\ranking_l}(q) &=& \frac{\kappa_{1}^{\ranking_l}(q) - \bar{\kappa}_{1}(q)}{\bar{\kappa}_{1}(q)}	\\
v_{2}^{\ranking_l}(q) &=& \frac{\kappa_{2}^{\ranking_l}(q) - \bar{\kappa}_{2}(q)}{\bar{\kappa}_{2}(q)}.
\end{eqnarray}
Note that  $v_{1}^{\ranking_l}(q) \!<\! 0$ and $v_{2}^{\ranking_l}(q)\!<\!0$ imply that the ranking $\ranking_l$ has lower consensus scores than the overall averages. For given constants $\epsilon_1\!>\!0$ and $\epsilon_2\!>\!0$, if $v_{1}^{\ranking_l}(q)\! < \! - \epsilon_1$ or $v_{1}^{\ranking_2}(q)\! <\! -\epsilon_2$, we regards $\ranking_l$ as an outlier of the ranking set. The values of $\epsilon_1, ~\epsilon_2$ depend on the specific need for a system.

This outlier detection method can be used to figure out irrelevant rankings in the ranking set and consequently identify the majority of rankings with higher consensus.  It is of great importance in many scenarios, e. g., design of auto-suggestion queries in search engine. It is worth noting that one potential application of the obtained detection method is to improve rank aggregation. Rank aggregation is the task of aggregating the preferences of different agents to generate a final ranking. 
The outliers of rankings/agents play a negative role in drawing a consensus ranking. Even though many existing studies have been carried out on rank aggregation \cite{volkovs2014new,chen2015spectral,caragiannis2018optimizing}, there is still room to improve aggregated rankings so that the aggregated result is as close to the ground truth as possible.  This will be studied in a separate paper. 

\section{Experimental studies}
\label{section: Experiment}

This section shows how the proposed approaches can be used to evaluate consensus for a set of rankings. The source code is available at   \url{https://github.com/zhiweiuu/secs}.

\subsection{Analysis of the Mechanical Turk Dots datasets}
The  Mechanical Turk Dots datasets \cite{mao2013better} include four publicly available datasets obtained for four dots tasks. These datasets each contain rankings obtained by 794 to 800 voters over four candidates. Each candidate corresponds to a certain number of random dots. The voters are asked to rank the candidates from those with the least dots to the most. Each task contains candidates with $200$, $200 \! + \! i$, $200 \! + \! 2i$, and $200+ \!  3i$ dots, where $i \!  = \!  {3, 5, 7, 9}$ respectively for the four tasks. Figure \ref{fig:SpearmanTurkDots} shows the proportions of rankings in each dataset with different Spearman's $\rho$ to the ground truth ranking.  The values of different Spearman's $\rho$ are distinguished by colors. It can be seen that the proportions of rankings with high Spearman coefficients $0.8$ and $1.0$ increase  from  Dataset 1 to Dataset 4, while that with coefficient $0.4$ decreases significantly. The ranking consensus degrees seem increasing from Dataset 1 to Dataset 4. We apply the proposed approach to accurately compare these datasets. 
\begin{figure}[t]
  \centering
  \includegraphics[width=2.9in]{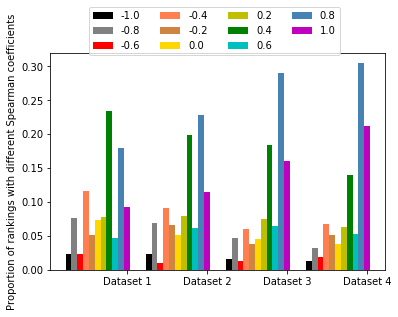}
	\caption{Spearman’s $\rho$ between the rankings and the ground truth ranking}
    \label{fig:SpearmanTurkDots}
\end{figure}

The overall consensus scores without weighting are first considered. Since the datasets have complete rankings, i.e., all the candidates under consideration are ranked in the rankings, the consensus scores of the single items satisfy $\bar{\kappa}_1(q)\!= \!4 $ for all $q$ and all the datasets. Figure \ref{fig:TurkDotsK2meanNoWeight} gives the overall consensus scores  $\bar{\kappa}_2(q)$ with respect to $\frac{q}{N}$, where  $\frac{q}{N}\geq 0.5$ indicating that  the commonality embedded in half or more than half of the rankings is evaluated. The trend of the overall consensus scores for the four datasets is clear. Dataset 4 has the largest overall consensus score, which indicates that Dataset 4 has the most $q$-support common patterns. Specifically, it can be seen from the figure that, when $\frac{q}{N}$ is 0.5, the consensus score $\bar{\kappa}_2(q)$ is $0.59, 0.62, 0.68$, and $0.71$ respectively for Dataset 1, 2, 3 and 4. This means that on average, $59.00\%, 62.00\%, 68.00\%$, and $71.00\%$  of the pairwise patterns of  a ranking are $\lceil\frac{N}{2}\rceil$-support patterns  in Dataset 1, 2, 3 and 4, respectively.  As the value of $q$ increases, the consensus scores decrease. When  $\frac{q}{N}$ reaches 0.67, the consensus score is zero for Dataset 1, which means that arbitrary $q\geq 0.67N$ rankings in the dataset have no common pattern. On the other hand,  the consensus scores are 0.12, 0.37, 0.38  for Dataset 2, 3, 4. In other words, on average, $12.00\%, 37.00\%, 38.00\%$ of the patterns of a ranking are supported by at least $0.67N$ rankings in the corresponding dataset. 
\begin{figure}[!t]
    \centering
    \includegraphics[width=3.4in]{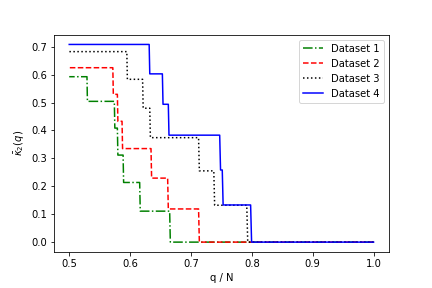}
    \caption{Consensus scores $\bar{\kappa}_2(q)$  of Dots datasets  without weighting}
    \label{fig:TurkDotsK2meanNoWeight}
\end{figure}

%
The overall consensus scores with weightings are then evaluated. Figure   \ref{fig:Weighted consensus scores of of Dots datasets} shows the consensus scores with respect to the weights $\gamma$ and $\lambda$ for a fixed $q\!=\!
\lceil \frac{N}{2} \rceil$. As shown, $\bar{\kappa}_{1}(\lceil\frac{N}{2}\rceil)$ and $\bar{\kappa}_{2}(\lceil\frac{N}{2}\rceil)$ decrease with the increase of weightings on the deviations of positions and position gaps.  Dataset 1 has the lowest overall consensus scores and Dataset 4 has the highest.  The ratios of the consensus scores between Dataset 4 and Dataset 3, Dataset 3 and Dataset 2, and Dataset 2 and Dataset 1 are shown in Table \ref{table:RatiosConsensusBetweenDatasets} for the cases without weighting and with weighting parameters $\gamma = 0.5, \lambda=0.5$. By comparing the two cases, it can be found that the ratios with weightings on the deviations of the position and position gaps are higher than those without weightings. This reveals that the differences of  the positions of the single $q$-support items and the position gaps of  the $q$-support patterns decrease from Dataset 1 to Dataset 4.

\begin{table}[!h] 
\centering
\caption{ Ratios of the consensus scores between datasets} 
\label{table:RatiosConsensusBetweenDatasets}
			\begin{tabular}{|c|c|c|c|}
				\hline
				  & Dataset 4/Dataset 3 &  Dataset 3/Dataset 2 & Dataset 2/Dataset 1 \\
				  \hline
				  $\bar{\kappa}_{1}(\lceil\frac{N}{2}\rceil), \gamma = 1$	  & 1.00 & 1.00 & 1.00 \\
				  \hline
				  $\bar{\kappa}_{1}(\lceil\frac{N}{2}\rceil), \gamma = 0.5$	  & 1.02 & 1.04 & 1.04\\
				  \hline
				  $\bar{\kappa}_{2}(\lceil\frac{N}{2}\rceil), \lambda=1$  & 1.04 & 1.09 &  1.05  \\
				  \hline
				  $\bar{\kappa}_{2}(\lceil\frac{N}{2}\rceil), \lambda=0.5$  & 1.05 & 1.11 & 1.07  \\
				  \hline
			\end{tabular}
\end{table}

\begin{figure}[!t]
    \centering
    \subfigure[$\bar{\kappa}_{1}(\lceil\frac{N}{2}\rceil)$ with respect to $\gamma$]
    {
        \includegraphics[width=3.2in]{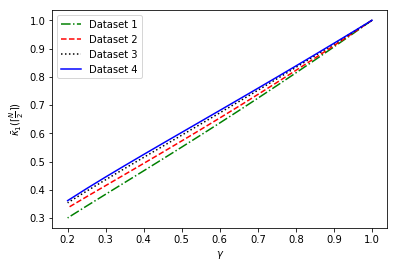}
        \label{fig:TurkDotsK1meanWeight}
    }
            \subfigure[$\bar{\kappa}_{2}(\lceil\frac{N}{2}\rceil)$ with respect to $\lambda$]
    {
        \includegraphics[width=3.2in]{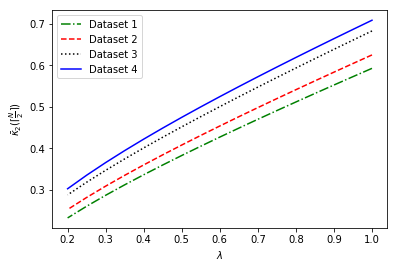}
        \label{fig:TurkDotsK2meanWeight}
    }
    \caption{Weighted consensus scores of Dots datasets}
    \label{fig:Weighted consensus scores of of Dots datasets}
\end{figure}

The relative deviations of $\kappa_2^{\ranking_l}(\lceil\frac{N}{2}\rceil)$ from the overall consensus score $\bar{\kappa}_{2}(\lceil\frac{N}{2}\rceil)$ is also studied to  verify the effectiveness of the proposed outlier detection method.  By choosing $\lambda = 0.5$, the result in  Table \ref{table:TurkConsensusDeviation} can be obtained. The deviations are very high for $\ranking_{21}, \ranking_{24}, \ranking_{22}, \ranking_{13}$
of Dataset 1, $\ranking_{20}, \ranking_{22}, \ranking_{19}, \ranking_{17}$ of Dataset 2, $\ranking_{19}, \ranking_{20}, \ranking_{22}, \ranking_{15}$ 
of Dataset 3, and $\ranking_{21}, \ranking_{22}, \ranking_{24},  \ranking_{20}$ of Dataset 4. These rankings are regarded as outliers of the datasets. They are $ (4, 3, 2, 1)$, $(4, 3, 1, 2)$, $(4, 2, 3, 1)$, $(3, 4, 2, 1)$ respectively in each dataset. Note that the Spearman’s $\rho$ between $(4, 3, 2, 1)$ and the ground truth $(1, 2, 3, 4)$ are $-1$, and all the Spearman coefficients  of the rest three to the ground truth are $-0.8$. After deleting these outlier rankings, the consensus score $\bar{\kappa}_{1}(\lceil\frac{N}{2}\rceil)$  with $\gamma = 0.5$ increases from $0.55, 0.57, 0.59, 0.60$ to $0.58, 0.59, 0.61, 0.62$ for Dataset 1, 2, 3, 4, respectively. The consensus score $\bar{\kappa}_{2}(\lceil\frac{N}{2}\rceil)$ changes from $0.38, 0.41, 0.45, 0.47$ to $0.42, 0.44, 0.48, 0.49$ for the four datasets. This confirms the effectiveness of the proposed outlier detection method. 

\begin{table}[!h] 
\centering
\caption{ Deviations of the consensus scores} 
\label{table:TurkConsensusDeviation}
			\begin{tabular}{|c|c|c|c|c|}
				\hline
				$q = \lceil\frac{N}{2}\rceil$ & Dataset 1 & Dataset 2 & Dataset 3 & Dataset 4 \\
				  \hline
				  $v_{2}^{\ranking_1}(q)$  & 0.72  & 0.68   &  0.56    & 0.55 \\
				  $v_{2}^{\ranking_2}(q)$  &  0.14  & 0.38  & 0.29 &  0.20 \\
				  $v_{2}^{\ranking_3}(q)$  & 0.44    &  0.29   & 0.28   &  0.18 \\
				  $v_{2}^{\ranking_4}(q)$  & 0.44 & 0.36  & 0.15  &   0.08 \\
				  $v_{2}^{\ranking_5}(q)$  &  0.09   &  0.08   &   -0.03    &   -0.09  \\
				  $v_{2}^{\ranking_6}(q)$  & 0.06 &  0.06 & -0.03  &   -0.21 \\
				  $v_{2}^{\ranking_7}(q)$  & 0.47  & -0.01   &  -0.15   &   -0.10 \\
				  $v_{2}^{\ranking_8}(q)$  &   0.11   &     -0.27  & -0.04 &  -0.11\\
				  $v_{2}^{\ranking_9}(q)$  &  0.02 &  -0.04   &  -0.35     &   -0.41  \\
				  $v_{2}^{\ranking_{10}}(q)$  &   -0.22 &  0.02  &  -0.37 &  -0.21 \\
				  $v_{2}^{\ranking_{11}}(q)$  &   -0.29 &   -0.28    &   -0.17  &     -0.20  \\
				  $v_{2}^{\ranking_{12}}(q)$  &  -0.38 & -0.11  &  -0.42   &   -0.41 \\
				  $v_{2}^{\ranking_{13}}(q)$  & \textbf{-0.71} & -0.13 &  -0.20  &   -0.56    \\
				  $v_{2}^{\ranking_{14}}(q)$  &  -0.25 &  -0.15 & -0.49  & -0.35   \\
				  $v_{2}^{\ranking_{15}}(q)$  & -0.06  & -0.46   &  \textbf{-0.74 }  &  -0.21 \\
				  $v_{2}^{\ranking_{16}}(q)$  & -0.09 &  -0.45 &   -0.20  &  -0.54    \\
				  $v_{2}^{\ranking_{17}}(q)$  & -0.36 & \textbf{-0.72}   &     -0.28  & -0.49  \\
				  $v_{2}^{\ranking_{18}}(q)$  &  -0.39 & -0.37 & -0.50  &   -0.47 \\
				  $v_{2}^{\ranking_{19}}(q)$  &   -0.40   & \textbf{ -0.74 }  &   \textbf{-1.00}  & -0.55  \\
				  $v_{2}^{\ranking_{20}}(q)$  &  -0.46 &  \textbf{-1.00}   & \textbf{-0.75}  &   \textbf{-0.74}   \\
				  $v_{2}^{\ranking_{21}}(q)$  & \textbf{-1.00} & -0.44 &  -0.50  &   \textbf{-1.00}   \\
				  $v_{2}^{\ranking_{22}}(q)$  & \textbf{-0.73} & \textbf{-0.75}  &   \textbf{-0.77}  &   \textbf{ -0.75}   \\
				  $v_{2}^{\ranking_{23}}(q)$  &   -0.12  & -0.45   &   -0.52  &     -0.54  \\
				  $v_{2}^{\ranking_{24}}(q)$  & \textbf{-0.74} & -0.47  &   -0.52 &   \textbf{-0.77} \\
				  \hline
			\end{tabular}
\end{table}

It is further found that the four datasets have the same set of the  $\lceil \frac{N}{2} \rceil$-support patterns $\subseqs_2(\lceil \frac{N}{2} \rceil) \! =\! \{12, 13, 14, 23, 24, 34\}$.  By aggregating these $\lceil \frac{N}{2} \rceil$-support patterns, we can obtain the ranking $(1, 2, 3, 4)$, i.e., the ground truth ranking. 
This enhances the advantage of the proposed consensus quantifying approach over the pairwise comparison approaches, where no common patterns of the rankings are specified.

\subsection{Evaluation of the information retrieval results of the 2015
CLEFeHealth Lab Task 2}
This experiment focuses on top-$k$ rankings using the dataset of the  CLEF 2015 eHealth Evaluation Lab Task 2 \cite{palotti2015clef}, instead of the complete rankings in the previous section. 
The CLEF 2015 eHealth Evaluation Lab Task 2 aimed to foster the design of web search engines in providing  access to medical information especially for self-diagnosis information, since commercial search engines were far from being effective in the field. 
The problem considered in the task was to retrieve web pages for queries related to different medical conditions. The queries were pre-generated by showing images and videos of medical conditions to potential users. 
There were 67 queries selected to be used in the task for 23 medical conditions, among which 22 conditions had three queries and one condition had one query. The queries were first created in English and then translated into several other languages.
The document collection made available to the participates for information retrieval contains approximately one million web pages on a broad range of health topics. The participates were asked to submit up to ten runs for the English queries. The first run of each team was with the highest priority for selection of documents to contribute to the final assessment. Twelve participating teams submitted their English information retrieval results.

This section evaluates the information retrieval results of the first English runs. Given that the first two pages of a user’s search result probably draw the most attention in practice, the top-20 retrieved documents for each query are considered in the evaluation. 
The conventional Speaman’s $\rho$ and Kendall’s $\tau$ measure the correlation of two complete rankings, as they compare the positions of same items in the two rankings. For this dataset with incomplete rankings, the Sperman’s $\rho$ and Kendall’s $\tau$ for top-$k$ rankings proposed in \cite{fagin2003comparing} are employed to measure the correlations for the 67 queries. Because a typo exists in the $62^{nd} $ query, there is no record of some teams for this query in the dataset. This query is not considered in the following analysis. Since there is no ground truth ranking available, we pairwisely compare the ranking for a query obtained by each team with the rankings of the other teams and take the average. The obtained  comparison results of the team for a query are further aggregated by taking their average.  Figure \ref{fig:TausefdefCLEF}  gives the results of Kendall’s $\tau$. Note that a key parameter $p$ is introduced in the calculation of Kendall’s $\tau$ for top-$k$ rankings in \cite{fagin2003comparing}. This parameter corresponds to the penalty for the case that two items $\sigma_i$ and $\sigma_j$ appears in one ranking $\ranking_l$ and none of them are considered in the other compared ranking $\ranking_z$. In this case, the term $\text{sgn}(\pi(\sigma_i, \ranking_l) - \pi(\sigma_j,  \ranking_l)) \text{sgn}(\pi(\sigma_i, \ranking_z) - \pi(\sigma_j,  \ranking_z))$ is set to be $p$.   We normalized the Kendall’s $\tau$ to the domain of $[-1, 1]$.
The parameter $p = 1$ gives an optimistic approach. It implies that 
$\sigma_i$ and $\sigma_j$ in $\ranking_z$ are regarded as in the same order as in $\ranking_l$ when there is no enough information about them. When $p = 0$, it gives a neutral approach. It can be found in Figure \ref{fig:TausefdefOptiCLEF} and Figure \ref{fig:TausefdefNeutralCLEF} that the Kendall’s coefficients are highly depends on the value of $p$. The result of Spearman’s $\rho$ is shown in Figure \ref{fig:SpearmansefdefCLEF}.  If an item $\sigma_i$ in one top-$k$ ranking $\ranking_l$ does not appear in the other compared top-$k$ ranking $\ranking_z$, then the position $\pi(\sigma_i, \ranking_z)$ is set to $\ell$. In Figure \ref{fig:SpearmansefdefCLEF}, $\ell$ is chosen to be $k + 1$.  The Spearman’s $\rho$ also depends on the value of $\ell$.

\begin{figure}[!t]
    \centering
    \subfigure[Optimistic approach]
    {
        \includegraphics[width=3.2in]{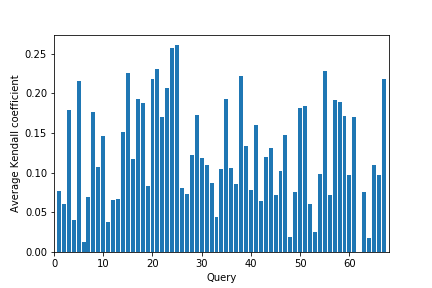}
        \label{fig:TausefdefOptiCLEF}
    }
    \subfigure[Neutral approach]
    {
        \includegraphics[width=3.2in]{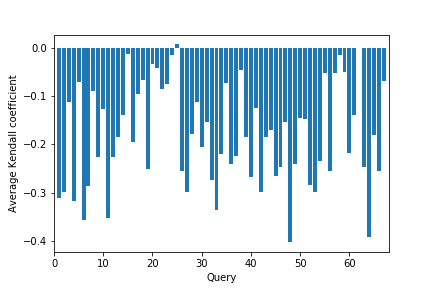}
        \label{fig:TausefdefNeutralCLEF}
    }
    \caption{Average Kendall’s $\tau$ for the   
    ranking sets of the 66 queries obtained by the 12 teams}
    \label{fig:TausefdefCLEF}
\end{figure}

\begin{figure}[!t]
    \centering
    \includegraphics[width=3.2in]{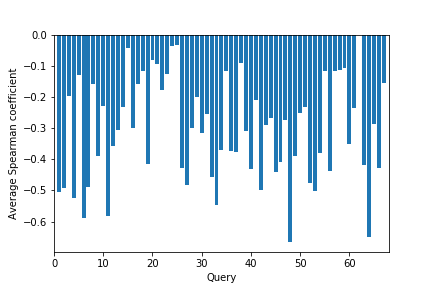}
    \caption{Average Spearman’s $\rho$  for the   
    ranking sets of the 66 queries obtained by the 12 teams}
    \label{fig:SpearmansefdefCLEF}
\end{figure}

Unlike the Speaman’s $\rho$ and Kendall’s $\tau$ for top-$k$ rankings, where assumptions about unknown factors are made without sufficient information and may consequently lead to bias in the measurement results, the proposed approach has no such problem and the consensus of a ranking set is measured more intuitively based on  $q$-support patterns. It provides a clear understanding  about the commonality emmbedded in the rankings obtained with different information retrieval approaches, and it can help to find hard topics in the information retrieval task.
Figure \ref{fig:QueryK1K2mean} shows the 6-support (i.e., $ \frac{N}{2}$-support) consensus scores without weightings for the ranking sets of the 66 queries obtained by the 12 teams. The relative values of the consensus scores are generally consistent with the results in Figures \ref{fig:TausefdefCLEF} and \ref{fig:SpearmansefdefCLEF}.   However, our results based on $q$-support patterns, especially the pairwise patterns, reveal more obvious and detailed information. It can be seen from Figure \ref{fig:QueryK1mean} that the consensus score $\bar{\kappa}_{1}(6)>0.5$ for queries 10, 13, 15, 20, 24, 25, 31, 38, 57, 58, 59, 67. This means that, on average, more than $50\%$ of the ranked items in a ranking for these queries are emmbedded in at least half of the ranking set. When the orders of these ranked items are further considered,  Figure \ref{fig:QueryK2mean} shows that, on average, more than $15\%$ of the pairwise patterns of a ranking are supported by at least half of the rankings for queries 20, 24, 25, 38, 57, 58, 59, 67.
Figure \ref{fig:QueryK1K2meanWeighted} shows the 6-support consensus scores with the weighting parameters on the deviations of positions and position gaps being $\gamma = 0.9, \lambda =0.9$. It can be noticed that queries 58, 25, 24, 55 have higher consensus score $\bar{\kappa}_{2}(6)$, which indicates that the rankings of these queries share more weighted pairwise $q$-support patterns. Moreover, the consensus score $\bar{\kappa}_{1}(6)$ for these queries are also high. In contrast, the consensus scores of queries 64, 48, 11, 33 are much lower. The detailed information of these queries is given in Table \ref{table:QueryHighK1K2mean} and Table \ref{table:QueryLowK1K2mean}. By comparing the two tables, it can be found that the queries with clear descriptions or for typical symptoms tend to have higher consensus scores, while vague descriptions or uncommon symptoms lead to retrieval results with  lower consensus scores. 
\begin{figure}[!t]
    \centering
    \subfigure[Consensus score $\bar{\kappa}_{1}(6)$]
    {
        \includegraphics[width=3.2in]{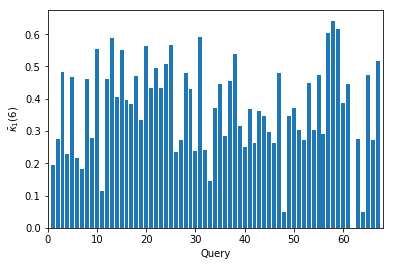}
        \label{fig:QueryK1mean}
    }
    \subfigure[Consensus score $\bar{\kappa}_{2}(6)$]
    {
        \includegraphics[width=3.2in]{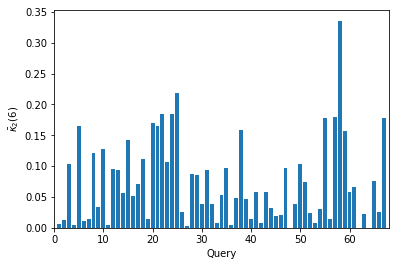}
        \label{fig:QueryK2mean}
    }
    \caption{Consensus scores without weighting for the ranking sets of the 66 queries obtained by the 12 teams}
    \label{fig:QueryK1K2mean}
\end{figure}

\begin{figure}[!t]
    \centering
    \subfigure[Consensus score $\bar{\kappa}_{1}(6)$]
    {
        \includegraphics[width=3.2in]{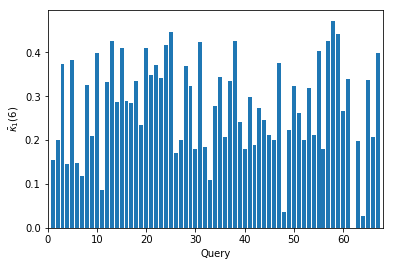}
        \label{fig:QueryK1meanWeighted}
    }
    \subfigure[Consensus score $\bar{\kappa}_{2}(6)$]
    {
        \includegraphics[width=3.2in]{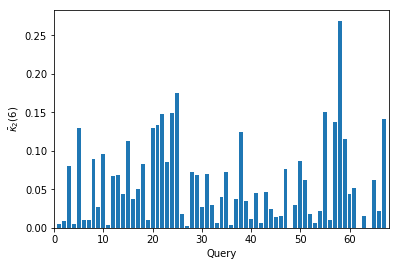}
        \label{fig:QueryK2meanWeighted}
    }
    \caption{Weighted consensus scores for the ranking sets of the 66 queries obtained by the 12 teams}
    \label{fig:QueryK1K2meanWeighted}
\end{figure}

\begin{table}[!h] 
\centering
\caption{Queries with higher consensus scores} 
\label{table:QueryHighK1K2mean}
			\begin{tabular}{|c|l|c|c|}
				\hline
				 Query ID &~~~~~~~~~~~~~~~~~ Query & $\bar{\kappa}_{1}(6)$ & $\bar{\kappa}_{2}(6)$ \\
				  \hline
				  58  & 39 degree and chicken pox & 0.47 & 0.27 \\
				  \hline
				  25 & red rash baby face &  0.45 & 0.17 \\
				  \hline
				  24 & yellow gunk coming from one eye itchy &  0.42 & 0.15\\
				  \hline
				  55 & crate type mark in skin &  0.40 & 0.15 \\
				  \hline
			\end{tabular}
\end{table}

\begin{table}[!h] 
\centering
\caption{Queries with lower consensus scores} 
\label{table:QueryLowK1K2mean}
			\begin{tabular}{|c|l|c|c|}
				\hline
				 Query ID & ~~~~~~~~~~~~~~~~~~~~~Query & $\bar{\kappa}_{1}(6)$ & $\bar{\kappa}_{2}(6)$ \\
				  \hline
				  64  & involuntary rapid left-right eye motion &  0.03 &  0.00 \\
				  \hline
				  48 & cannot stop moving my eyes medical condition & 0.04 & 0.00\\
				  \hline
				  11 & white patchiness in mouth & 0.09 & 0.00 \\
				  \hline
				  33 & white infection in pharynx & 0.11 & 0.01\\
				  \hline
			\end{tabular}
\end{table}

The  consensus of the information retrieval results for each topic is also evaluated with the proposed approch. The queries for each topic are supposed to link to an identical medical conditions. The consensus based on $2$-support patterns is studied for  the 22 topics each with three queries. Topic 13 is not considered, since it associates with query 62 having incomplete record in the dataset. We take the average of the consensus scores of the ranking sets of the 12 teams. The results are given in Figure \ref{fig:TopicK1K2mean}. 
Specially, the rankings of topics 15 and  11 have the highest average consensus scores, and the average consensus scores for topic 21 and topic 18 are the lowest. By comparing the topics and the details of the related queries in Table \ref{table:TopicHighK1K2mean} and Table \ref{table:TopicLowK1K2mean}, it can be found that the diseases of topics 15 and 11 are more common diseases to be easily self-diagnosed and the generated queries share more commonalities. On the contrary, the topics with low consensus scores have more diverse queries, thus they can be regard as hard topics, which can be used in further tasks for the development of more advanced search engines.  

\begin{figure}[!t]
    \centering
	\subfigure[Team average of the consensus score $\bar{\kappa}_{1}(2)$]
	{
		\includegraphics[width=3.2in]{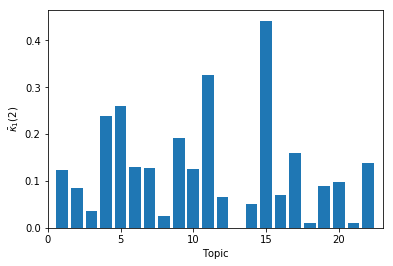}
		\label{fig:TopicK1mean}
	}
	\subfigure[Team average of the consensus score $\bar{\kappa}_{2}(2)$]
	{
		\includegraphics[width=3.2in]{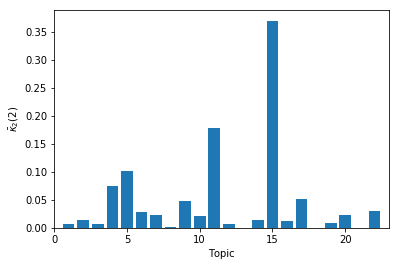}
		\label{fig:TopicK2mean}
	}
    \caption{Average weighted consensus scores for the 22 topics of the 3 queries}
    \label{fig:TopicK1K2mean}
\end{figure}


\begin{table}[!h] 
\centering
\caption{Topics with higher average consensus scores} 
\label{table:TopicHighK1K2mean}
			\begin{tabular}{|c|l|c|c|}
				\hline
				 Topic & ~~~~~~~~~~~~~~~~~~~~~~~~~~~~~Query & $\bar{\kappa}_{1}(2)$ & $\bar{\kappa}_{2}(2)$ \\
				  \hline
				  15: whooping cough & 12: baby has dry cough and has problem to swallow saliva  & 0.44 & 0.37 \\
				   (pertussis) & 46: baby cough &  &   \\
				   &  66: treatment of coughs in babies&  &  \\
				  \hline
				  11: bronchiolitis  & 31: toddler having squeaky breath& 0.32 & 0.17 \\
				   (caused by rsv) & 49: baby always  breathing with mouth closed &  & \\
				   &  59: heavy and squeaky breath &  & \\
				  \hline
			\end{tabular}
\end{table}


\begin{table}[!h] 
\centering
\caption{Topics with lower average consensus scores} 
\label{table:TopicLowK1K2mean}
			\begin{tabular}{|c|l|c|c|}
				\hline
				 Topic & ~~~~~~~~~~~~~~~~~~~~~~~~  Query & $\bar{\kappa}_{1}(2)$ & $\bar{\kappa}_{2}(2)$ \\
				  \hline
				  21: nystagmus & 36: eye are shaking & 0.01  &  0.00 \\
				    & 48: cannot stop moving my eyes medical condition  &  &  \\
				   & 64: involuntary rapid left-right eye motion&  &  \\
				  \hline
				  18: asthma wheezing & 6: child make hissing sound when breathing & 0.01   & 0.00 \\
				  & 15: asthma attack &  & \\
				   & 30: weird sounds when breathing  &  & \\
				  \hline
			\end{tabular}
\end{table}

\section{Conclusion}
\label{section: Conclusion}
This paper presents a novel approach to quantifying the consensus degree of a ranking set. A new concept of $q$-support has been introduced to represent the common patterns embedded in the rankings. A matrix representation has been developed to describe the commonality
within a ranking set that is shared by an individual ranking, on the basis of which an algorithm has been developed to quantify the  consensus efficiently. Moreover, a scheme for detecting outliers in a ranking set is derived from the consensus quantifying approach. Consensus evaluation with weighting on item positions and position gaps has also been considered. Compared with the existing methods based on correlation or distance functions, our approach can characterize and quantify the group preferences more explicitly and it also lays the foundation for the effective
detection of outliers and the development of rank aggregation algorithm, which have been illustrated in the experimental studies.

\ifCLASSOPTIONcaptionsoff
  \newpage
\fi

\bibliographystyle{IEEEtran} 
\bibliography{mybibliography}

\end{document}